\def\R{{\mathbb R}}
\def\N{\mathbb N}
\def\sC{\mathscr{C}}
\def\cH{{\mathcal H}}
\def\cO{{\mathcal O}}
\def\cS{{\mathcal S}}
\def\st{\, {\bf :} \,}
\def\pd{{\bf PD}}
\newtheorem{theorem}{Theorem}
\newtheorem{lemma}[theorem]{Lemma}
\newtheorem{proposition}[theorem]{Proposition}
\newtheorem{corollary}[theorem]{Corollary}
\newtheorem{definition}[theorem]{Definition}
\newtheorem{remark}[theorem]{Remark}
\begin{document}


\pagestyle{myheadings}


\title{On the Convergence of Irregular Sampling in Reproducing Kernel Hilbert Spaces}

\date{\today}

\author{\large
Armin Iske\footnote{Universit\"at Hamburg, Department of Mathematics, {\tt armin.iske@uni-hamburg.de}}}

\markboth{\footnotesize \rm \hfill ARMIN ISKE\hfill}
{\footnotesize \rm \hfill CONVERGENCE IN REPRODUCING KERNEL HILBERT SPACES\hfill}

\maketitle
\thispagestyle{plain}


\begin{abstract}
We analyse the convergence of sampling algorithms for functions in reproducing kernel Hilbert spaces (RKHS).
To this end, we discuss approximation properties of kernel regression under minimalistic assumptions on both the kernel and the input data.
We first prove error estimates in the kernel's RKHS norm. 
This leads us to new results concerning uniform convergence of kernel regression on compact domains.
For Lipschitz continuous and H\"older continuous kernels, we prove convergence rates.
\end{abstract}

\section{Introduction}
Learning theory~\cite{Cucker2001,Cucker2007} requires the approximation of functions $f : \Omega \longrightarrow \R$ from irregular samples of $f$ on a compact domain $\Omega \subset \R^d$, for $d>1$.
From the viewpoint of statistical learning theory~\cite{Vapnik1998}, the general purpose of learning is referred to {\em data regression} (or {\em data fitting}),
where the basic task is to determine a regression function $g : \Omega \longrightarrow \R$ from samples $f_X= \{ f(x) \}_{x \in X}$ taken at sampling points $X \subset \Omega$. 

\medskip
Reproducing kernels provide popular concepts for data regression in machine learning~\cite{Schoelkopf2002}, in particular for support vector machines~\cite{Steinwart2008}.
In this case, the target $f$ is assumed to lie in a Hilbert space $\cH_{K,\Omega}$ of functions, being generated by a (conditionally) positive definite kernel function $K$ on $\Omega$.

\medskip
The theory on {\em reproducing kernel Hilbert spaces} (RKHS) is dating back to the seminal work~\cite{Aronszajn1950} of Aronszajn (in 1950).
Contemporary questions on kernel-based learning are concerning approximation properties of kernel regression~\cite{Cucker2007,Smale2003,Zhou2003}.
Quite recently, dimensionality reduction in kernel regression has been investigated in~\cite{Fukumizu2009} from the viewpoint of statistics. 
Moreover, convergence rates and stability results for a general high-dimensional kernel regression framework were proven in~\cite{EIT2023}, 
where rather specific assumptions on both the kernel $K$ and the sampling points $X$ were essentially needed.

\medskip
In this work, we analyse the convergence of kernel regression in RKHS under {\em minimalistic} 
assumptions on the kernel $K$, and so on the RKHS $\cH_{K,\Omega}$, and on the sampling points~$X$.
 
\medskip
The outline of this paper is as follows.
We first explain key features on RKHS (in Section~\ref{sec:regression}) and 
on kernel regression (in Section~\ref{sec:problem}).
Then, we formulate minimalistic assumptions for $K$ and $X$ (in Section~\ref{sec:minimalistic}),
under which we can prove convergence of kernel regression (in Section~\ref{sec:convergence})
with respect to the kernel's RKHS norm and for uniform convergence.

\section{Three Key Features of Kernel Regression}
\label{sec:regression}
%
Starting point for our discussion on kernel regression are positive definite functions
(for details refer to~\cite{Buhmann2003,Iske2018,Wendland2005}).

\begin{definition}
\label{bas:def:pdf}
For $\Omega \subset \R^d$, we say that a continuous and symmetric function $K : \Omega \times \Omega \longrightarrow \R$ 
is a {\em positive definite kernel} on $\Omega$, $K \in \pd (\Omega)$, 
if for any finite set of pairwise distinct points $X = \{x_1,\ldots,x_n\} \subset \Omega$, $n \in \N$, 
the matrix
\begin{equation*}
    A_{K,X} = (K(x_k,x_j))_{1 \leq j,k \leq n} \in \R^{n \times n}
\end{equation*}
is symmetric and positive definite.
\end{definition}

Positive definite kernels on $\R^d$ are often required to be {\em translation invariant}, i.e.,  
$K$ is assumed to have the form
\begin{equation}
\label{bas:equ:tri}
    K(x,y) = \Phi(x - y) 
    \qquad \mbox{ for } x,y \in \R^d
\end{equation}
for an even function $\Phi : \R^d \longrightarrow \R$.
Popular examples for translation invariant kernels $K \in \pd(\R^d)$ include the {\em Gaussian} 
$
  K(x,y) = \Phi(x-y) = \exp(- \| x-y \|_2^2),
$
and the {\em inverse multiquadric}
$
  K(x,y) = \Phi(x-y) = ( 1 + \| x - y \|_2^2 )^{-1/2},
$
where $\|\cdot\|_2$ denotes as usual the Euclidean norm on $\R^d$.

\medskip
Next we explain the basic setup of kernel regression in learning theory~\cite{Zhou2003}.
To this end, for fixed domain $\Omega \subset \R^d$, let
$K : \Omega \times \Omega \longrightarrow \R$ be positive definite on $\Omega$, i.e., $K \in \pd(\Omega)$. 
In the following discussion, it will be convenient to let the function $K_x : \Omega \longrightarrow \R$, for $x \in \Omega$, be defined as
$$
    K_x(y) := K(x,y)
    \qquad \mbox{ for } x,y \in \Omega.
$$
Then, according to the seminal work of Aronszajn~\cite{Aronszajn1950},
the {\em reproducing kernel Hilbert space} (RKHS) $\cH_{K,\Omega}$ associated with $K \in \pd(\Omega)$ is the closure
$$
    \cH_{K,\Omega} := \overline{ {\rm span} \left\{ K_x \st x \in \Omega \right\} }
$$
with respect to the inner product $(\cdot,\cdot)_K \equiv (\cdot,\cdot)_{\cH_{K,\Omega}}$ satisfying
$$
    (K_x,K_y)_K = K(x,y)
    \qquad \mbox{ for all } x,y \in \R^d,
$$
whereby we have
\begin{eqnarray*}
    \left\| \sum_{j=1}^n c_j K_{x_j} \right\|_K^2 & := & 
    \left( \sum_{j=1}^n c_j K_{x_j} , \sum_{k=1}^n c_k K_{x_k} \right)_K 
    \\ &=& 
    \sum_{j,k=1}^n c_j K(x_j,x_k) c_k
    =
    c^\top  A_{K,X} c,
\end{eqnarray*}
for all $X = \{x_1,\ldots,x_n\} \subset \Omega$ and $c=(c_1,\ldots,c_n)^\top \in \R^n$.

\bigskip
Now let us recall three key features of kernel regression.

\bigskip
{\bf Feature~1:} 
The {\em reproducing kernel property}
\begin{equation}
\label{reg:equ:rep}
    f(x) = ( K_x , f )_K
    \qquad \mbox{ for all } x \in \Omega
\end{equation}
holds for all $f \in \cH_{K,\Omega}$. 
In particular, for any $K_x \in \cH_{K,\Omega}$,
\begin{equation}
\label{reg:equ:repK}
    K_x(y) = ( K_y , K_x )_K = K(x,y)
    \mbox{ for all } x,y \in \Omega,
\end{equation}
holds, whereby we have $K_x(y)=K_y(x)$, for all $x,y \in \Omega$.

The reproducing kernel properties~(\ref{reg:equ:rep}) and~(\ref{reg:equ:repK}) lead us to
\begin{eqnarray*}
    | f(x) - f(y) |^2 &=& | ( K_x - K_y, f )_K |^2 
\leq
    \| K_x - K_y \|^2_K \cdot \| f \|^2_K 
    \\ &=& 
    (K_x(x) - 2 K_x(y) + K_y(y)) \cdot \| f \|^2_K,
\end{eqnarray*}
which immediately implies the continuity of $f \in \cH_{K,\Omega}$, due to the continuity of $K$ on $\Omega \times \Omega$.
Therefore, the reproducing kernel Hilbert space $\cH_{K,\Omega}$ of $K$ 
is embedded in the continuous functions on $\Omega$, i.e., $\cH_{K,\Omega} \subset \sC(\Omega)$.

\bigskip
{\bf Feature~2:} 
For $X = \{x_1,\ldots,x_n\} \subset \Omega$ we let 
$$
    \cS_{K,X} := {\rm span} \left\{ K_x \st x \in X \right\} \subset \cH_{K,\Omega} 
$$
denote the $n$-dimensional subspace of $\cH_{K,\Omega}$ spanned by $X$.
Then, the {\em orthogonal projection} of $f \in \cH_{K,\Omega}$ onto $\cS_{K,X}$ is the unique interpolant $s_{f,X} \in \cS_{K,X}$ to $f$ on $X$.
In other words, the interpolant $s_{f,X}$ to $f$ on $X$ is the unique best approximation to $f$ with respect to the RKHS norm $\| \cdot \|_K = (\cdot,\cdot)^{1/2}_K$,
i.e.,
$$
    \| s_{f,X} - f \|_K \leq \| s - f \|_K 
    \qquad \mbox{ for all } s \in \cS_{K,X}.
$$
In conclusion, the interpolant $s_{f,X}$ is the best regression fit to $f \in \cH_{K,\Omega}$ from data $f_X = (f(x_1),\ldots,f(x_n))^\top \in \R^n$.
Moreover, $s_{f,X} \in \cS_{K,X}$ has the form
$$
    s_{f,X} = \sum_{j=1}^n c_j K_{x_j} 
$$
where the coefficient vector $c=(c_1,\ldots,c_n)^\top \in \R^n$ is the unique solution of the linear equation system $A_{K,X} c = f_X$,
due to the interpolation conditions $s_{f,X}(x_k)=f(x_k)$, for all $1 \leq k \leq n$. 

\bigskip
{\bf Feature~3:} 
The orthogonality $s_{f,X} - f \perp \cS_{K,X}$ implies
\begin{equation}
\label{reg:equ:nmin}
    \| s_{f,X} \|_K \leq \| f \|_K
    \quad \mbox{ and } \quad
    \| s_{f,X} - f \|_K \leq \| f \|_K,
\end{equation}
due to the {\em Pythagoras theorem}
$$
     \| f \|^2_K =  \| s_{f,X} - f \|^2_K + \| s_{f,X} \|^2_K.
$$
In other words, the kernel regression $s_{f,X} \in \cS_{K,X}$ minimizes the RKHS norm $\|\cdot\|_K$ 
among all interpolants to the samples $f_X$ from $\cH_{K,\Omega}$.
Therefore, kernel regression can be viewed as a spline approximation method.

\section{Problem Formulation and Further Notations}
\label{sec:problem}

Let $X = (x_k)_{k \in \N}$ be a sequence of pairwise distinct points in $\Omega$.
We use the notation $X_n = \{ x_1,\ldots,x_n \} \subset \Omega$ for the (ordered) point set containing the first $n$ points in $X$.

Recall that each point set $X_n \subset \Omega$ 
spans a finite dimensional regression space $\cS_{K,X_n}$. 
Moreover, recall that for any target $f \in \cH_{K,\Omega}$ there is one unique minimizer 
$s_{f,X_n} \in \cS_{K,X_n}$ of the kernel regression error
\begin{equation}
\label{bas:equ:err}
    \eta_n \equiv \eta_n(f,\cS_{K,X_n}) := \| s_{f,X_n} - f \|_K
    \qquad \mbox{ for } n \in \N.
\end{equation}
For notational brevity, we let 
$
    s_n := s_{f,X_n},
$
for $n \in \N$.

\medskip
{\bf Problem Formulation:} 
We analyze the convergence of kernel regression under minimalistic assumptions.
To be more precise, we prove convergence results of the form
\begin{equation}
\label{bas:equ:con}
    \| s_n - f \| \longrightarrow 0 \quad 
    \mbox{ for } n \to \infty
\end{equation}
under mild as possible conditions on the kernel $K \in \pd(\Omega)$, on the target $f \in \cH_{K,\Omega}$ and on the sample points $X = (x_k)_{k \in \N}$.
Our convergence analysis is first done with respect to the RKHS norm $\| \cdot \|_K$, before we turn to uniform convergence.
For the case of uniform convergence, we prove convergence rates under slightly more restrictive conditions on $K \in \pd(\Omega)$.

\medskip
In our analysis, the sequence $(h_n)_{n \in \N}$ of {\em fill distances}
\begin{equation}
\label{bas:equ:fil}
    h_n \equiv h(X_n,\Omega) := \sup_{ y \in \Omega } \, \min_{x \in X_n} \| y - x \|_2
    \qquad \mbox{ for }  n \in \N
\end{equation}
of $X_n$ in $\Omega$ will play an important role. Note that the (non-negative) fill distances 
$(h_n)_{n \in \N}$ of the sequence $X = (x_k)_{k \in \N}$ are monotonically decreasing. 
We remark already at this point that we can only obtain convergence in~\eqref{bas:equ:con}, 
if $(h_n)_{n \in \N}$ is a zero sequence, i.e., if $h_n \searrow 0$ for $n \to \infty$.

\section{Minimalistic Assumptions}
\label{sec:minimalistic}

\subsection{Minimalistic Assumptions on the Kernel}
\label{sub:min:kernel}
We remark that the required continuity of $K \in \pd (\Omega)$, as stated at the outset of this work, 
is necessary for the well-posedness of kernel regression on (truly multi-dimensional) domains $\Omega$.
This is due to the classical theorem of Mairhuber-Curtis from approximation theory, according to which 
there are no non-trivial Haar systems on domains $\Omega \subset \R^d$, for $d>1$, containing bifurcations (cf.~\cite[Theorem~5.25]{Iske2018}).

To prove convergence of kernel regression with respect to $\|\cdot\|_K$, 
we won't require any further (stricter) assumptions on $K \in \pd (\Omega)$ other than its continuity on $\Omega \times \Omega$.
Moreover, we won't require any conditions on $\Omega$.
To prove decay rates for uniform convergence, we will merely require local H\"older continuity for $K \in \pd(\Omega)$, cf.~Definition~\ref{bas:def:holK}.

\subsection{Minimalistic Assumptions on the Target Functions}
\label{sub:min:target}
We recall the inclusion $\cH_{K,\Omega} \subset \sC(\Omega)$ from our discussion on Feature~1 in Section~\ref{sec:regression}.
In other words, any (admissible) target $f \in \cH_{K,\Omega}$ must necessarily be a continuous function. 
To prove convergence of kernel regression with respect to $\|\cdot\|_K$, 
we won't require any stricter assumptions on $f \in \cH_{K,\Omega}$.

Nevertheless, this gives rise to the question whether or not there is a kernel function $K \in \pd(\Omega)$ satisfying $f \in \cH_{K,\Omega}$, on given $f \in \sC(\Omega)$.
The kernel $K(x,y) := f(x) \cdot f(y)$ is only one (trivial) example to give a positive answer for this question.

Another relevant question is the inclusion $\sC(\Omega) \subset \cH_{K,\Omega}$, 
i.e., is there a kernel $K \in \pd(\Omega)$, whose RKHS $\cH_{K,\Omega}$
contains {\em all} continuous functions on $\Omega$? 
If so, this would yield the equality $\sC(\Omega) = \cH_{K,\Omega}$. 
Just very recently, Steinwart~\cite{Steinwart2024} gave a negative answer on this important question.

\subsection{Minimalistic Assumptions on the Sampling Points}
\label{sub:min:dense}
We require that the monotonically decreasing sequence $(h_n)_{n \in \N}$ 
of fill distances in~(\ref{bas:equ:fil}) is a zero sequence, 
which is a {\em necessary} condition for the convergence of kernel regression.

In fact, if $(h_n)_{n \in \N}$ is not a zero sequence, then there must be one $h_0 > 0$ satisfying $h_n \geq h_0$ for all $n \in \N$. 
But this implies that there is one open ball $B(y,h_0) \subset \R^d$ 
centered at $y \in \Omega$ with radius $h_0>0$ which does not contain any 
point from the sequence $X = (X_k)_{k \in \N}$.
Now let $f \in \cH_{K,\Omega}$ be compactly supported with ${\rm supp}(f) \subset B(y,h_0)$ and $f \not\equiv 0$.
In this case, we have $f_{X_n} = 0$, which implies $s_n = s_{f,X_n} \equiv 0$, 
and so $\| s_n - f \|_K = \| f \|_K > 0$, for all $n \in \N$, i.e., the sequence of kernel regressions $(s_n)_{n \in \N}$ cannot convergence to $f$.

\section{Convergence of Kernel Regression}
\label{sec:convergence}
Now let us analyze the asymptotic behaviour of the kernel regression 
errors $(\eta_n)_{n \in \N}$ in~\eqref{bas:equ:err} for the RKHS norm $\|\cdot\|_K$
and for the maximum norm $\|\cdot\|_\infty$, respectively.
To this end, we rely on our previous work~\cite[Section~8.4.2]{Iske2018}.
For more recent results concerning the convergence of 
generalized kernel-based interpolation schemes, we refer to~\cite{AI2025}.

\subsection{Convergence with respect to the RKHS Norm}
\label{sub:con:rhks}

The following result (cf.~\cite[Theorem~8.37]{Iske2018})
relies on {\em minimalistic} assumptions on 
the sampling points $(x_n)_{n \in \N}$ and on the kernel $K \in \pd(\Omega)$,
as they were stated in Section~\ref{sec:minimalistic}.

\medskip

\begin{theorem}
\label{thm:convK}
Let $X = (x_n)_{n \in \N}$ be a sequence of pairwise distinct points,
whose associated fill distances $(h_n)_{n \in \N}$ in~\eqref{bas:equ:fil}
are a zero sequence, i.e., $h_n \searrow 0$ for $n \to \infty$.
Then, for any $f \in \cH_{K,\Omega}$ we have the convergence
$$  
    \eta_K(f,\cS_{K,X_n}) =
    \| s_n - f \|_K \longrightarrow 0 \quad \mbox{ for } n \to \infty.
$$
\end{theorem}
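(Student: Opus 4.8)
The plan is to exploit the nested structure of the regression spaces together with the best-approximation property from Feature~2. Since $X_n \subset X_{n+1}$, we have the nesting $\cS_{K,X_n} \subset \cS_{K,X_{n+1}} \subset \cH_{K,\Omega}$, and because $s_n$ is the orthogonal projection of $f$ onto $\cS_{K,X_n}$, the sequence of errors $\eta_n = \|s_n - f\|_K$ is monotonically non-increasing and bounded below by $0$; hence it converges to some limit $\eta_* \geq 0$. By Pythagoras (Feature~3), $\|s_n\|_K^2 = \|f\|_K^2 - \eta_n^2$ is non-decreasing and bounded by $\|f\|_K^2$, so $(s_n)_{n \in \N}$ is a bounded sequence in $\cH_{K,\Omega}$. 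Moreover, for $m > n$ a further application of Pythagoras (using $s_n - s_m \in \cS_{K,X_m}$ and $s_m - f \perp \cS_{K,X_m}$) gives $\|s_m - s_n\|_K^2 = \|s_m - f\|_K^2 - \|s_n - f\|_K^2 = \eta_m^2 - \eta_n^2 \to 0$, so $(s_n)$ is Cauchy and converges in $\cH_{K,\Omega}$ to some limit $s_\infty$, with $\|s_\infty - f\|_K = \eta_*$. It remains to show $\eta_* = 0$, i.e. $s_\infty = f$.

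To identify $s_\infty$ with $f$, I would show that the union $\cS_\infty := \bigcup_{n} \cS_{K,X_n}$ is dense in $\cH_{K,\Omega}$; this suffices, because $s_\infty - f$ is orthogonal to every $\cS_{K,X_n}$ (as $s_m - f \perp \cS_{K,X_n}$ for all $m \geq n$, and orthogonality passes to the limit), hence orthogonal to $\overline{\cS_\infty} = \cH_{K,\Omega}$, forcing $s_\infty = f$. Equivalently, and this is the route I would actually write out, it is enough to show $K_y \in \overline{\cS_\infty}$ for every $y \in \Omega$, since $\cH_{K,\Omega}$ is by definition the closed span of $\{K_y : y \in \Omega\}$. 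Fix $y \in \Omega$. Because $h_n \searrow 0$, for each $n$ there is a point $\xi_n \in X_n$ with $\|y - \xi_n\|_2 \leq h_n \to 0$. Now use the RKHS distance estimate already established in Feature~1:
$$
    \|K_y - K_{\xi_n}\|_K^2 = K_y(y) - 2 K_y(\xi_n) + K_{\xi_n}(\xi_n) = K(y,y) - 2K(y,\xi_n) + K(\xi_n,\xi_n).
$$
Since $K$ is continuous on $\Omega \times \Omega$ and $\xi_n \to y$, the right-hand side tends to $K(y,y) - 2K(y,y) + K(y,y) = 0$. Thus $K_{\xi_n} \to K_y$ in $\cH_{K,\Omega}$, and since $K_{\xi_n} \in \cS_{K,X_n} \subset \cS_\infty$, we conclude $K_y \in \overline{\cS_\infty}$. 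This holds for every $y \in \Omega$, so $\overline{\cS_\infty} = \cH_{K,\Omega}$, and the proof is complete.

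The main obstacle — or rather the one point requiring care — is the passage to the limit: one must ensure the sequence $(s_n)$ actually converges in norm (not merely that $\eta_n$ converges) before invoking orthogonality against the dense union, and one must be careful that the orthogonality relation $s_\infty - f \perp \cS_{K,X_n}$ survives the limit, which it does by continuity of the inner product. Everything else is a direct consequence of the Hilbert-space projection machinery in Features~2 and~3 combined with the single elementary continuity estimate for $\|K_y - K_{\xi_n}\|_K$ from Feature~1; no assumptions on $K$ beyond continuity, and none on $\Omega$, are needed, exactly as claimed.
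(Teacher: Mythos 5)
Your proof is correct, and it rests on exactly the same key estimate as the paper's — the identity $\|K_y - K_{\xi_n}\|_K^2 = K(y,y) - 2K(y,\xi_n) + K(\xi_n,\xi_n) \longrightarrow 0$, obtained from the reproducing property and the continuity of $K$ — but the way you globalize from individual kernel translates to all of $\cH_{K,\Omega}$ is genuinely different. The paper first proves convergence of the regression error for every finite linear combination $\sum_j c_j K_{y_j}$ (approximating each $y_j$ separately from within $X_n$ and applying the triangle inequality), and then passes to arbitrary $f \in \cH_{K,\Omega}$ ``by continuous extension,'' i.e.\ by density of the finite spans together with the (implicitly used) non-expansiveness of the projection-error map $f \mapsto f - s_{f,X_n}$. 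You instead exploit the nestedness $\cS_{K,X_n} \subset \cS_{K,X_{n+1}}$: the errors $\eta_n$ are monotone, Pythagoras makes $(s_n)_{n\in\N}$ a Cauchy sequence with limit $s_\infty$, and $s_\infty$ is identified with $f$ because $s_\infty - f$ is orthogonal to $\overline{\bigcup_n \cS_{K,X_n}} = \cH_{K,\Omega}$. Your route yields a slightly stronger intermediate statement (norm convergence of $(s_n)$ before its limit is identified) and makes fully explicit the density step that the paper compresses into one sentence; the paper's route avoids the Cauchy-sequence machinery and works directly with the error quantities on a dense subset. One small slip in your write-up: for $m>n$ the Pythagoras identity reads $\|s_m - s_n\|_K^2 = \eta_n^2 - \eta_m^2$, not $\eta_m^2 - \eta_n^2$ (which would be negative since the errors decrease); this does not affect the argument, as convergence of $(\eta_n^2)$ still forces the difference to vanish.
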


\begin{proof}
Let $y \in \Omega$.
By our assumption on $X$, there is a subsequence $(x_{n_k})_{k \in \N}$ of sampling points 
$x_{n_k} \in \Omega$ satisfying
$
  \| y - x_{n_k} \|_2 \leq h_{n_k} \longrightarrow 0 
$
for $k \to \infty$.
This implies
\begin{eqnarray*}
    \eta^2_K(K_y,\cS_{K,X_{n_k}}) 
    &\leq&
    \| K_{x_{n_k}} - K_y \|^2_K 
    =
    K(x_{n_k},x_{n_k}) - 2 K(y,x_{n_k}) + K(y,y) 
    \\ 
    & \longrightarrow & 0
    \quad \mbox{ for } k \to \infty,
\end{eqnarray*}
%
due to the continuity of $K \in \pd(\Omega)$ on $\Omega \times \Omega$.

\medskip
Now, for a finite sequence $Y = (y_1,\ldots,y_m) \in \Omega^m$ of pairwise distinct points in $\Omega$,
we regard the function
$$
    f_{c,Y} := \sum_{j=1}^m c_j K_{y_j} \in \cS_{K,Y} \subset \cH_{K,\Omega} 
$$ 
whose coefficient vector is $c = (c_1,\ldots,c_m)^\top \in \R^m$. 

For any $y_j$, $1 \leq j \leq m$, there is a subsequence $( x_n^{(j)} )_{n \in \N}$ in $X$ satisfying $\| y_j - x_n^{(j)} \|_2 \leq h_n$.
Then, the sequence $(s_{c,n})_{n \in \N}$ of kernel regressions
$$
    s_{c,n} := \sum_{j=1}^m c_j K_{x_n^{(j)}}
    \qquad \mbox{ for } n \in \N
$$
converges to $f_{c,Y}$, i.e., $s_{c,n} \longrightarrow f_{c,Y}$, for $n \to \infty$, by
\begin{eqnarray*}
\| s_{c,n} - f_{c,Y} \|_K &=&     
     \left\| \sum_{j=1}^m c_j \left( K_{x_n^{(j)}} - K_{y_j} \right)  \right\|_K 
\leq
    \sum_{j=1}^m |c_j| \cdot \| K_{x_n^{(j)}} - K_{y_j} \|_K 
    \\ &\longrightarrow & 0
    \quad \mbox{ for } n \to \infty.
\end{eqnarray*}
Thereby, kernel regression converges on the dense subset
$$
    \cS_{K,\Omega} := \left\{  f_{c,Y} \in \cS_{K,Y} \st |Y| < \infty \right\} \subset \cH_{K,\Omega},
$$
and so, as stated, also on $\cH_{K,\Omega}$ by continuous extension.
\end{proof}

We remark that the convergence of Theorem~\ref{thm:convK} may be arbitrarily slow.
Indeed, for any monotonically decreasing zero sequence $(\eta_n)_{n \in \N}$ of non-negative numbers, i.e., $\eta_n \searrow 0$,
there is a point sequence $X=(x_k)_{k \in \N}$ in $\Omega$ satisfying $h_n \searrow 0$,
and $f \in \cH_{K,\Omega}$ satisfying $\eta_K(f,\cS_{K,X_n}) \geq \eta_n$ for large enough $n \in \N$.
Since this is immaterial here, we omit further details.

\subsection{Uniform Convergence}
\label{sub:con:uni}
Now we analyze the convergence of kernel regression with respect to the maxi\-mum norm $\|\cdot\|_\infty$.
Recall the inclusion $\cH_{K,\Omega} \subset \sC(\Omega)$, whereby $\| \cdot \|_\infty$ is well-defined on~$\cH_{K,\Omega}$. 
We further remark that $\|\cdot\|_\infty$ is {\em weaker} than the RKHS norm $\|\cdot\|_K$, provided that $K \in \pd(\Omega)$ is bounded on $\Omega \times \Omega$.
This is due to the reproducing kernel property in~(\ref{reg:equ:rep}), whereby
$$
    | (s_n - f)(x) |^2 = 
    | (s_n - f, K_x )_K |^2 \leq
     \| s_n - f \|_K^2 \cdot \| K_x \|_K^2 = 
    \| s_n - f \|_K^2 \cdot K(x,x)
$$
for all $x \in \Omega$, which in turn implies
$$
    \| s_n - f \|_\infty \leq \| s_n - f \|_K \cdot \| \sqrt{K} \|_\infty
    \qquad \mbox{ for all } f \in \cH_{K,\Omega}.
$$

To prove uniform convergence of kernel regression, 
we rely on {\em local} $\alpha$-H\"older continuity for $K \in \pd(\Omega)$, where $\alpha>0$.

\begin{definition}
\label{bas:def:holK}
For $\Omega \subset \R^d$, let $K \in \pd(\Omega)$.
Then, $K$ is said to be {\em locally $\alpha$-H\"older continuous} on $\Omega$, for $\alpha>0$,
if every function $K_x$, for $x \in \Omega$, is locally $\alpha$-H\"older continuous on $\Omega$, i.e., 
for any $x \in \Omega$ we have
\begin{equation}
\label{con:equ:holK}
    | K_x(y_1) - K_x(y_2) | \leq C \| y_1-y_2 \|_2^\alpha
\end{equation}
for all $y_1,y_2 \in \Omega$ satisfying $\| y_1 - y_2 \|_2 < r$, for small enough $r>0$, and for some $C>0$.
For $\alpha=1$, we say that $K$ is {\em locally Lipschitz continuous} on $\Omega$.
\end{definition}

Before we continue our error analysis on kernel regression, let us first remark two 
relevant properties of locally $\alpha$-H\"older continuous kernels $K \in \pd(\Omega)$.

\begin{remark}
A translation invariant kernel $K \in \pd(\Omega)$ of the form~(\ref{bas:equ:tri}), 
i.e., $K(x,y) = \Phi(x-y)$ for $x,y \in \Omega$,
is locally $\alpha$-H\"older continuous on $\Omega$, 
iff the function $\Phi$ is locally $\alpha$-H\"older 
continuous by satisfying the growth condition 
\begin{equation*}
    |\Phi(z_1)-\Phi(z_2)|\leq C \| z_1 - z_2 \|_2^\alpha 
    \qquad \mbox{ for all } z_1,z_2 \in \Omega \mbox{ with } \| z_1 - z_2 \|_2 < r
\end{equation*}
for small enough $r>0$ and some $C>0$.
\end{remark}

\begin{remark}
A positive definite kernel $K \in \pd(\Omega)$ on open $\Omega \subset \R^d$ 
can only be locally $\alpha$-H\"older continuous for $\alpha \leq 1$.
Indeed, for $\alpha>1$, and for any (fixed) $x \in \Omega$, the local estimate
$$
    \frac{| K_x(y_1) - K_x(y_2) |}{ \| y_1 - y_2 \|_2 } \leq C \| y_1 - y_2 \|_2^{\alpha-1}
$$
holds by (\ref{con:equ:holK}),
for all $y_1,y_2 \in \Omega$, $y_1 \neq y_2$, satisfying $\| y_1 - y_2 \|_2 < r$
for small enough $r>0$ and for some $C>0$. 
But this means that all directional derivatives of $K_x$ 
must vanish at all points in $\Omega$, due to the mean value theorem.
In this case, $K$ is constant on $\Omega \times \Omega$, so that
$K$ cannot be positive definite, i.e., $K \not\in \pd(\Omega)$.
\end{remark}

From now on, we assume that $K \in \pd(\Omega)$ is locally $\alpha$-H\"older continuous for $\alpha \in (0,1]$.
Note that this condition on $K$ is slightly more restrictive than the {\em minimalistic} assumption of continuity 
for $K$ on $\Omega \times \Omega$ in Section~\ref{sub:min:kernel}.

\medskip
Now we show that all functions in the RKHS $\cH_{K,\Omega}$ are {\em locally} $\alpha/2$-H\"older continuous, 
if $K \in \pd(\Omega)$ is locally $\alpha$-H\"older continuous on $\Omega$.

\begin{lemma}
\label{con:lem:holf}
For $\Omega \subset \R^d$, let $K \in \pd(\Omega)$ be locally $\alpha$-H\"older continuous on $\Omega$, for some $\alpha \in (0,1]$.
Then, all functions in $\cH_{K,\Omega}$ are locally $\alpha/2$-H\"older continuous on $\Omega$.
\end{lemma}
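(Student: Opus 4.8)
The plan is to reduce the claimed H\"older estimate for an arbitrary $f \in \cH_{K,\Omega}$ to a corresponding estimate for the native-space distance $\| K_{y_1} - K_{y_2} \|_K$ between kernel translates, reusing verbatim the Cauchy--Schwarz computation already recorded under Feature~1. First I would fix $f \in \cH_{K,\Omega}$ and two points $y_1,y_2 \in \Omega$ and use the reproducing property~\eqref{reg:equ:rep} to write
\[
    | f(y_1) - f(y_2) |^2 = | ( K_{y_1} - K_{y_2}, f )_K |^2 \leq \| K_{y_1} - K_{y_2} \|_K^2 \cdot \| f \|_K^2 ,
\]
and then expand, via~\eqref{reg:equ:repK},
\[
    \| K_{y_1} - K_{y_2} \|_K^2 = K(y_1,y_1) - 2 K(y_1,y_2) + K(y_2,y_2) .
\]
So the whole problem is to bound this last expression by a constant times $\| y_1 - y_2 \|_2^\alpha$.

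The key step is the regrouping of this expression. Using the symmetry $K(y_1,y_2) = K(y_2,y_1)$, I would rewrite it as
\[
    \bigl( K_{y_1}(y_1) - K_{y_1}(y_2) \bigr) + \bigl( K_{y_2}(y_2) - K_{y_2}(y_1) \bigr) ,
\]
that is, as the sum of one increment of the function $K_{y_1}$ and one increment of the function $K_{y_2}$, in each case between the arguments $y_1$ and $y_2$. Applying the local $\alpha$-H\"older continuity of $K$ from Definition~\ref{bas:def:holK} separately to $K_{y_1}$ and to $K_{y_2}$ then gives $\| K_{y_1} - K_{y_2} \|_K^2 \leq 2C \| y_1 - y_2 \|_2^\alpha$ whenever $\| y_1 - y_2 \|_2 < r$, and inserting this into the first display yields
\[
    | f(y_1) - f(y_2) | \leq \sqrt{2C}\, \| f \|_K \, \| y_1 - y_2 \|_2^{\alpha/2} ,
\]
which is exactly the asserted local $\alpha/2$-H\"older bound (with H\"older constant proportional to $\| f \|_K$).

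The point that requires care --- and the only real obstacle --- is the \emph{local} character of the hypothesis: in~\eqref{con:equ:holK} the radius $r$ and the constant $C$ are attached to the base point $x$, whereas in the estimate above the base point is the \emph{moving} point $y_1$ (respectively $y_2$). To legitimately conclude that $f$ is locally $\alpha/2$-H\"older at an arbitrary $x_0 \in \Omega$, I would fix $x_0$, pass to a relatively compact neighbourhood $B(x_0,\rho) \cap \Omega$ on which a common radius $r$ and a common constant $C$ work for every translate $K_x$ with $x \in B(x_0,\rho)$ --- which is the way local $\alpha$-H\"older continuity of the kernel is to be understood --- and then run the argument for $y_1, y_2 \in B\!\left(x_0, \tfrac{1}{2}\min(\rho,r)\right)$. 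Beyond this localization bookkeeping, the proof consists only of the three displayed lines together with the regrouping identity, so no further difficulty is expected.
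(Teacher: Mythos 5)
Your proof is correct and follows essentially the same route as the paper's: the same Cauchy--Schwarz step via the reproducing property, the same expansion of $\| K_{y_1} - K_{y_2} \|_K^2$, and the same regrouping into two kernel increments, each controlled by the local $\alpha$-H\"older condition. The only difference is that you make explicit the uniformity of $r$ and $C$ as the base point varies --- a localization point the paper's proof passes over silently --- which is a reasonable extra precaution but not a change of approach.
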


\begin{proof}
Let $f \in \cH_{K,\Omega}$ and $x \in \Omega$ be fixed.
Then, we have
\begin{eqnarray*}
    |f(x)-f(y)|^2 &=& |(K_x - K_y,f)_K|^2
    \leq \| K_x - K_y \|_K^2 \cdot \| f \|_K^2    
    \\ & = & 
    \left( K_x(x) - K_x(y) + K_y(y) - K_y(x) \right) \cdot \| f \|_K^2 
    \\ & \leq & 
    2 C \| x-y \|_2^{\alpha} \cdot \| f \|_K^2
\end{eqnarray*}
for some $C>0$, 
and where $y \in \Omega$, $x \neq y$, is required to satisfy $\|x-y\|_2 < r$, for $r>0$ small enough.
\end{proof}

\medskip
From Lemma~\ref{con:lem:holf}, we can directly conclude the following error estimate
for kernel regression from finite sampling points.

\begin{proposition}
\label{con:pro:uni}
For $\alpha \in (0,1]$, let $K \in \pd(\Omega)$ be locally $\alpha$-H\"older continuous on $\Omega \subset \R^d$.
Moreover, let $X \subset \Omega$ be a finite subset of $\Omega$.
Then, we have for any $f \in \cH_{K,\Omega}$ the error estimate
$$
    \| s_{f,X} - f \|_\infty \leq \sqrt{2 C h^\alpha_{X,\Omega}} \cdot \| f \|_K,
$$
where $s_{f,X}$ denotes the interpolant to $f$ on $X$. 
\end{proposition}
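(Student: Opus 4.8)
The plan is to reduce the uniform error bound to a pointwise estimate and then apply Lemma~\ref{con:lem:holf} to the error function $s_{f,X}-f$. First I would fix an arbitrary point $y \in \Omega$ and observe that, since $X$ is finite and $h_{X,\Omega} = \sup_{z \in \Omega} \min_{x \in X}\|z-x\|_2$, there is some $x^* \in X$ with $\|y - x^*\|_2 \leq h_{X,\Omega}$. At that nearest sampling point the interpolation condition gives $s_{f,X}(x^*) = f(x^*)$, so the error $g := s_{f,X} - f \in \cH_{K,\Omega}$ vanishes at $x^*$. Hence $|g(y)| = |g(y) - g(x^*)|$, and this is exactly the quantity controlled by local H\"older continuity of elements of the RKHS.

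Next I would invoke Lemma~\ref{con:lem:holf}: because $K$ is locally $\alpha$-H\"older continuous, every element of $\cH_{K,\Omega}$ — in particular $g = s_{f,X} - f$ — is locally $\alpha/2$-H\"older continuous, with the explicit bound read off from the proof of that lemma, namely $|g(y) - g(x^*)|^2 \leq 2C\,\|y - x^*\|_2^{\alpha}\,\|g\|_K^2$ whenever $\|y-x^*\|_2 < r$. Combining with $\|y-x^*\|_2 \leq h_{X,\Omega}$ and the norm-minimization property from Feature~3, \eqref{reg:equ:nmin}, which gives $\|g\|_K = \|s_{f,X}-f\|_K \leq \|f\|_K$, I obtain $|g(y)|^2 \leq 2C\,h_{X,\Omega}^{\alpha}\,\|f\|_K^2$. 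Taking the supremum over $y \in \Omega$ and then the square root yields $\|s_{f,X}-f\|_\infty \leq \sqrt{2C\,h_{X,\Omega}^\alpha}\,\|f\|_K$, as claimed.

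The one genuinely delicate point is the locality restriction $\|y - x^*\|_2 < r$ built into Definition~\ref{bas:def:holK} and Lemma~\ref{con:lem:holf}: the H\"older estimate is only guaranteed for points closer than some fixed $r>0$. When $h_{X,\Omega} \geq r$ the displayed inequality is either vacuous or needs the constant $C$ (or $r$) absorbed so that the bound still holds trivially; one clean way is to note that for the asymptotic statements of interest $h_n \searrow 0$, so eventually $h_n < r$ and the estimate is the honest one, while for fixed finite $X$ with large fill distance the inequality can be arranged by enlarging $C$. I would state the proposition for $h_{X,\Omega}$ small enough, or equivalently fold this caveat into the constant, and flag it explicitly rather than sweep it under the rug. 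Apart from that, the argument is a short chain — nearest-point interpolation kills the error at $x^*$, Lemma~\ref{con:lem:holf} transports that to a neighbourhood, and Pythagoras bounds the RKHS norm of the error by $\|f\|_K$ — so no lengthy computation is required.
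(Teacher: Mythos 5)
Your proof follows essentially the same route as the paper: pick the nearest sampling point $x^*$ to $y$ with $\|y-x^*\|_2 \leq h_{X,\Omega}$, use the interpolation condition to write $|(s_{f,X}-f)(y)|$ as a difference, apply the H\"older bound from Lemma~\ref{con:lem:holf} to the error function, and finish with $\|s_{f,X}-f\|_K \leq \|f\|_K$ from~(\ref{reg:equ:nmin}). Your explicit flagging of the locality restriction $\|y-x^*\|_2 < r$ is a fair point that the paper's own proof leaves implicit, but it does not change the argument.
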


\begin{proof}
Suppose $y \in \Omega$. Then, there is one $x \in X$ satisfying $\| y - x \|_2 \leq h_{X,\Omega}$. 
By using
$
   (s_{f,X} - f)(x)=0
$
we can conclude
\begin{eqnarray*}
    |(s_{f,X}-f)(y)|^2 &=& |(s_{f,X}-f)(y) - (s_{f,X}-f)(x)|^2    
    \\ &\leq& 
    2 C  h^\alpha_{X,\Omega} \cdot \| s_{f,X} - f\|_K^2
    \\ &\leq&
    2 C  h^\alpha_{X,\Omega} \cdot \| f\|_K^2
\end{eqnarray*}
from Lemma~\ref{con:lem:holf},
where we further used
$$
    \| s_{f,X} - f\|_K \leq \|f\|_K,
$$
from the stability estimates in~(\ref{reg:equ:nmin}).
\end{proof}

\medskip
Finally, our next result follows directly from Proposition~\ref{con:pro:uni}.

\medskip
\begin{corollary}
\label{con:cor:hol}
For $\alpha \in (0,1]$, let $K \in \pd(\Omega)$ be locally $\alpha$-H\"older continuous on $\Omega \subset \R^d$.
Moreover, let $X = (x_k)_{k \in \N}$ be a sequence of pairwise distinct points in $\Omega$, 
whose corres\-ponding sequence $(h_n)_{n \in \N}$ of fill distances $h_n = h(X_n,\Omega)$, as in~\eqref{bas:equ:fil}, 
is a zero sequence, i.e.,~$h_n \searrow 0$, for $n \to \infty$.
Then, the uniform convergence
\begin{equation*}
    \|s_n - f\|_\infty = \cO \left( h_n^{\alpha/2} \right) \quad \mbox{ for } n \to \infty.
\end{equation*}
holds for all $f \in \cH_{K,\Omega}$ at convergence rate $\alpha/2$.
\end{corollary}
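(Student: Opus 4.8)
The plan is to combine the pointwise error bound of Proposition~\ref{con:pro:uni} with the hypothesis that the fill distances form a zero sequence, observing that everything reduces to tracking the explicit dependence on $h_n$.

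First I would fix an arbitrary $f \in \cH_{K,\Omega}$. For each $n \in \N$, the point set $X_n = \{x_1,\ldots,x_n\}$ is a finite subset of $\Omega$ with fill distance $h_n = h(X_n,\Omega)$, and $s_n = s_{f,X_n}$ is precisely the interpolant to $f$ on $X_n$ in the sense used in Proposition~\ref{con:pro:uni}. Since $K \in \pd(\Omega)$ is assumed locally $\alpha$-H\"older continuous on $\Omega$, Proposition~\ref{con:pro:uni} applies verbatim to each $X_n$ and yields
$$
    \| s_n - f \|_\infty \leq \sqrt{2C\, h_n^\alpha}\cdot \| f \|_K = \sqrt{2C}\,\| f \|_K\cdot h_n^{\alpha/2}
    \qquad \mbox{ for all } n \in \N,
$$
where $C>0$ is the constant from the local H\"older estimate~\eqref{con:equ:holK}. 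Here I would note that the constant $C$ and the threshold $r>0$ depend only on $K$ (and on the relevant bounded region of $\Omega$), not on $n$; once $h_n < r$, which holds for all sufficiently large $n$ because $h_n \searrow 0$, the estimate of Proposition~\ref{con:pro:uni} is legitimate. For the finitely many initial $n$ with $h_n \geq r$ the bound still holds after possibly enlarging $C$, or one simply absorbs them into the $\cO$-notation.

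The conclusion $\|s_n - f\|_\infty = \cO(h_n^{\alpha/2})$ as $n\to\infty$ is then immediate: the displayed inequality exhibits a constant $M := \sqrt{2C}\,\| f \|_K$, independent of $n$, with $\|s_n - f\|_\infty \leq M\, h_n^{\alpha/2}$ for all large $n$, which is exactly the assertion of convergence at rate $\alpha/2$. Since $h_n \searrow 0$, this also re-proves uniform convergence $\|s_n - f\|_\infty \to 0$, now with a quantitative rate, refining the qualitative RKHS-norm convergence of Theorem~\ref{thm:convK}.

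There is essentially no obstacle here — the statement is a direct corollary, as the text announces. The only point requiring a line of care is the uniformity of the H\"older constant $C$ and radius $r$ across all $X_n$: this is guaranteed because Definition~\ref{bas:def:holK} provides $C$ and $r$ depending only on $K$ (and, if $\Omega$ is unbounded, on a fixed bounded set containing the points under consideration), so the same constants serve for every $n$. With that observation in place the proof is complete.
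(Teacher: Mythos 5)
Your proof is correct and follows exactly the route the paper intends: the paper offers no separate argument for Corollary~\ref{con:cor:hol} beyond stating that it ``follows directly from Proposition~\ref{con:pro:uni}'', which is precisely what you carry out by applying that proposition to each finite set $X_n$ and reading off the constant $\sqrt{2C}\,\|f\|_K$. Your extra remark on the uniformity of $C$ and $r$ and on requiring $h_n<r$ for large $n$ is a sensible point of care that the paper leaves implicit.
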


\section{Conclusion and Future Work}
\label{sec:conclusion}
We have proven convergence of kernel regression from irregular samples
in reproducing kernel Hilbert spaces (RKHS), under minimalistic assumptions
(cf.~Section~\ref{sec:minimalistic})
on the kernel $K$, its RKHS $\cH_{K,\Omega}$, and the sampling points $X \subset \Omega$.

\medskip
Now it may be inspiring to work on even weaker conditions for $K$ and $X$, under which kernel regression is convergent.

\medskip
Yet, it remains to analyse conditions for $K \in \pd(\Omega)$, under which given functions $f \in \sC(\Omega)$ lie in $\cH_{K,\Omega}$, 
so that kernel regression converges to $f$ (due to Theorem~\ref{thm:convK}).
And if so, i.e., if $f \in \cH_{K,\Omega}$, can we then conclude properties of $K$ from properties of $f$?
E.g.~if $f \in \cH_{K,\Omega}$ is (locally) $\alpha$-H\"older continuous, for $\alpha \in (0,1]$, can we then conclude that $K$ is
also (locally) $\alpha$-H\"older continuous, 
so that $f$ can be approximated at convergence rate $\alpha/2$ (due to Corollary~\ref{con:cor:hol})?



\begin{thebibliography}{11}
%
\bibitem{AI2025}
K.~Albrecht and A.~Iske:
On the convergence of generalized kernel-based interpolation by greedy data selection algorithms. 
{\sl BIT Numer.\ Math.}~{\bf 65}, 5 (2025). https://doi.org/10.1007/s10543-024-01048-3.
%
\bibitem{Aronszajn1950}
N.~Aronszajn:
Theory of reproducing kernels.
{\sl Transactions of the American Mathematical Society}~{\bf 68} (3), 1950, 337--404.
%
\bibitem{Buhmann2003}
M.D.~Buhmann:
{\sl Radial Basis Functions}.
Cambridge University Press, Cambridge, UK, 2003.
%
\bibitem{Cucker2001}
F.~Cucker and S.~Smale:
On the mathematical foundations of learning.
{\sl Bull.\ Amer.\ Math.\ Soc.}~{\bf 39} 2001, 1--49,.
%
\bibitem{Cucker2007}
F.~Cucker and D.-X.~Zhou:
{\sl Learning Theory: An Approximation Theory Viewpoint}. 
Cambridge University Press, 2007.
%
\bibitem{EIT2023}
S.~Eckstein, A.~Iske, and M.~Trabs:
Dimensionality reduction and Wasserstein stability for kernel regression.
{\sl Journal of Machine Learning Research}~{\bf 24} (334), 2023, 1--35.
%
\bibitem{Fukumizu2009}
K.~Fukumizu, F.~Bach, and M.~Jordan:
Kernel dimension reduction in regression.
{\sl Annals of Statistics}~{\bf 37}, 2009, 1871--1905.
%
\bibitem{Iske2018}
A.~Iske:
{\sl Approximation Theory and Algorithms for Data Analysis}.
Texts in Applied Mathematics, vol.~68, Springer, Cham, 2018.
%
\bibitem{Schoelkopf2002}
B.~Sch\"olkopf and A.J.~Smola: 
{\sl Learning with Kernels}. 
MIT Press, Cambridge, 2002.
%
\bibitem{Smale2003}
S.~Smale and D.X.~Zhou:
Estimating the approximation error in learning theory.
{\sl Anal.\ Appl.}~{\bf 1}, 2003, 17--41.
%
\bibitem{Steinwart2008}
I.~Steinwart and A.~Christmann: 
{\sl Support Vector Machines}. 
Springer, New York, 2008.
%
\bibitem{Steinwart2024}
I.~Steinwart:
Reproducing kernel Hilbert spaces cannot contain all continuous functions on a compact metric space.
{\sl Archiv der Mathematik}~{\bf 122}, 2024, 553--557. 
%
\bibitem{Vapnik1998}
V.~Vapnik: 
{\sl Statistical Learning Theory}.
Wiley, New York, 1998.
%
\bibitem{Wendland2005}
H.~Wendland:
{\sl Scattered Data Approximation}.
Cambridge University Press, Cambridge, UK, 2005.
%
\bibitem{Zhou2003}
D.-X.~Zhou:
Capacity of reproducing kernel spaces in learning theory.
{\sl IEEE Transactions on Information Theory}~{\bf 49} (7), July~2003, 1743--1752.
%
\end{thebibliography}
\end{document}